\algrenewcommand\algorithmicindent{1.0em}
\newcommand{\secref}[1]{\S\ref{#1}}
\newtheorem{theorem}{Theorem}
\newtheorem{corollary}{Corollary}
\newtheorem{lemma}{Lemma}
\def\blfootnote{\gdef\@thefnmark{}\@footnotetext}
\newcommand{\cfs}{{\cal F}_s}
\newcommand{\cfd}{{\cal F}_d}
\newcommand{\cfdi}{{\cal F}_{d_i}}
\newcommand{\fvec}{{\bf f}}
\newcommand{\cc}{{\cal C}}
\newcommand{\cd}{{\cal D}}
\newcommand{\cl}{{\cal L}}
\newcommand{\expe}{\mathop{{}\mathbb{E}}}
\algnewcommand{\LeftComment}[1]{\State \(\triangleright\) #1}
\newcommand{\pluseq}{\mathrel{+}=}
\newcommand{\mli}[1]{\mathit{#1}}
\DeclareMathOperator*{\argmin}{arg\,min}
\DeclareMathOperator*{\argmax}{arg\,max}
\newcommand{\amdc}{\texttt{MAN}}
\title{Multinomial Adversarial Networks for Multi-Domain Text Classification}
\author{Xilun Chen\\
  Department of Computer Science\\
  Cornell Unversity\\
  Ithaca, NY, 14853, USA\\
  {\tt xlchen@cs.cornell.edu}\\\And
  Claire Cardie\\
  Department of Computer Science\\
  Cornell Unversity\\
  Ithaca, NY, 14853, USA\\
  {\tt cardie@cs.cornell.edu}\\
}
\date{}
\begin{document}
\maketitle
\begin{abstract}
Many text classification tasks are known to be highly domain-dependent. 
Unfortunately, the availability of training data can vary drastically across domains.
Worse still, for some domains there may not be any annotated data at all.
In this work, we propose a \emph{multinomial adversarial network} (\amdc{}) to tackle the text classification problem in this real-world multi-domain setting (MDTC).
We provide theoretical justifications for the \amdc{} framework, proving that different instances of \amdc{}s are essentially minimizers of various f-divergence metrics~\cite{10.2307/2984279} among \emph{multiple} probability distributions.
\amdc{}s are thus a theoretically sound generalization of traditional adversarial networks that discriminate over \emph{two} distributions.
More specifically, for the MDTC task, \amdc{} learns features that are invariant across multiple domains 
by resorting to its ability to reduce the divergence among the feature distributions of each domain.
We present experimental results showing that \amdc{}s significantly outperform the prior art on the MDTC task.
We also show that \amdc{}s achieve state-of-the-art performance for domains with no labeled data.
\end{abstract}

\blfootnote{The source code of \amdc{} can be found at \url{https://github.com/ccsasuke/man}}
\section{Introduction}
Text classification is one of the most fundamental tasks in Natural Language Processing, and has found its way into a wide spectrum of NLP applications, ranging from email spam detection and social media analytics to sentiment analysis and data mining.
Over the past couple of decades, supervised statistical learning methods have become the dominant approach for text classification (e.g.~\newcite{mccallum1998comparison,D14-1181,P15-1162}).
Unfortunately, many text classification tasks are highly domain-dependent in that a text classifier trained using labeled data from one domain is likely to perform poorly on another.
In the task of sentiment classification, for example, 
a phrase ``runs fast'' is usually associated with positive sentiment in the sports domain; not so when a user is reviewing the battery of an electronic device.
In real applications, therefore, an adequate amount of training data from 
each domain of interest is typically required, and this is expensive to obtain.

Two major lines of work attempt to tackle this challenge: \textbf{domain adaptation}~\cite{P07-1056} and \textbf{multi-domain text classification} (MDTC)~\cite{P08-2065}.
In domain adaptation, the assumption is that there is some domain with abundant training data (the source domain), and the goal is to utilize knowledge learned from the source domain to help perform classifications on another lower-resourced target domain.\footnote{Review \secref{sec:relatedwork} for other variants of domain adaptation.}
The focus of this work, MDTC, instead simulates an arguably more realistic scenario, where labeled data may exist for multiple domains, but in insufficient amounts to train an effective classifier for one or more of the domains.
Worse still, some domains may have \emph{no} labeled data at all.
The objective of MDTC is to leverage all the available resources in order to improve the system performance over all domains simultaneously.

One state-of-the-art system for MDTC, the CMSC of~\newcite{wu2015}, combines a
 classifier that is shared across all domains (for learning domain-invariant knowledge) 
 with a set of classifiers, one per domain, each of which captures
 domain-specific text classification knowledge.
This paradigm is sometimes known as the Shared-Private model~\cite{NIPS2016_6254}.
CMSC, however, lacks an explicit mechanism to ensure that the shared classifier 
captures only domain-independent knowledge:  
the shared classifier may well also acquire some domain-specific features 
that are useful for a subset of the domains.
We hypothesize that better performance can be obtained if this constraint were 
explicitly enforced.  

In this paper, we thus propose Multinomial 
Adversarial Networks (henceforth, \amdc{}s) for the task of multi-domain
text classification. 
In contrast to standard adversarial networks \cite{NIPS2014_5423}, which
serve as a tool for minimizing
the divergence between \emph{two} distributions~\cite{nowozin2016f},  
\amdc{}s represent a family of theoretically sound adversarial networks that, in contrast,
leverage a \emph{multinomial discriminator} to directly minimize the 
divergence among multiple probability distributions.
And just as binomial adversarial networks have been applied 
to numerous tasks 
(e.g.\ image generation~\cite{NIPS2014_5423},
domain adaptation~\cite{ganin2016domain}, cross-lingual sentiment analysis~\cite{2016arXiv160601614C}),
we anticipate that \amdc{}s will make a versatile machine learning 
framework with applications beyond the MDTC task studied in this work.

We introduce the \amdc{} architecture in \secref{sec:model} and prove in \secref{sec:model_proof} that it 
 directly minimizes the (generalized) f-divergence among multiple distributions 
so that they are indistinguishable upon successful training.
Specifically for MDTC, \amdc{} is used to overcome the aforementioned limitation in prior art where domain-specific features may sneak into the shared model.
This is done by relying on \amdc{}'s power of minimizing the divergence among the feature distributions of each domain.
The high-level idea is that \amdc{} will make the extracted feature distributions of each domain indistinguishable from one another, thus learning general features that are invariant across domains.

We then validate the effectiveness of \amdc{} in experiments on two MDTC data sets.
We find first that \amdc{} significantly outperforms the state-of-the-art 
CMSC method~\cite{wu2015} on the widely used multi-domain Amazon review dataset,
and does so without relying on external resources such as sentiment 
lexica (\secref{sec:mdtc_exp}). 
When applied to the FDU-MTL dataset (\secref{sec:mtl_exp}), we
obtain similar results: \amdc{} achieves substantially higher accuracy than the 
previous top-performing method, ASP-MTL~\cite{P17-1001}.
ASP-MTL is the first empirical attempt to use a multinomial adversarial network proposed for a multi-task learning setting,
but is more restricted and can be viewed as a special case of \amdc{}.
In addition, we for the first time provide theoretical guarantees for \amdc{} (\secref{sec:model_proof}) that were absent in ASP-MTL.
Finally, while many MDTC methods such as CMSC require labeled data for each domain, \amdc{}s
can be applied in cases where no labeled data exists for
a subset of domains.  To evaluate \amdc{} in this semi-supervised setting, we 
compare \amdc{} to a method that can accommodate unlabeled data for (only) one domain~\cite{DBLP:journals/corr/ZhaoZWCMG17}, and show that \amdc{} achieves 
 performance comparable to the state of the art (\secref{sec:multisource_exp}).

\section{Model}\label{sec:model}

In this paper, we strive to tackle the text classification problem in a real-world setting
in which texts come from a variety of domains, each with a varying amount of labeled data.
Specifically, assume we have a total $N$ domains, $N_1$ \emph{labeled
domains} (denoted as $\Delta_L$) for which there is some labeled data, and $N_2$ \emph{unlabeled domains} ($\Delta_U$) for which no annotated training instances are
available.
Denote $\Delta = \Delta_L \cup \Delta_U$ as the collection of all domains, with $N = N_1+N_2$ being the total number of domains we are faced with.
The goal of this work, and MDTC in general, is to improve the overall classification performance across all $N$ domains, measured in this paper as the average classification accuracy across the $N$ domains in $\Delta$.

\subsection{Model Architecture}\label{sec:model_arch}
\begin{figure}[t]
    \centering
    \includegraphics[width=0.48\textwidth]{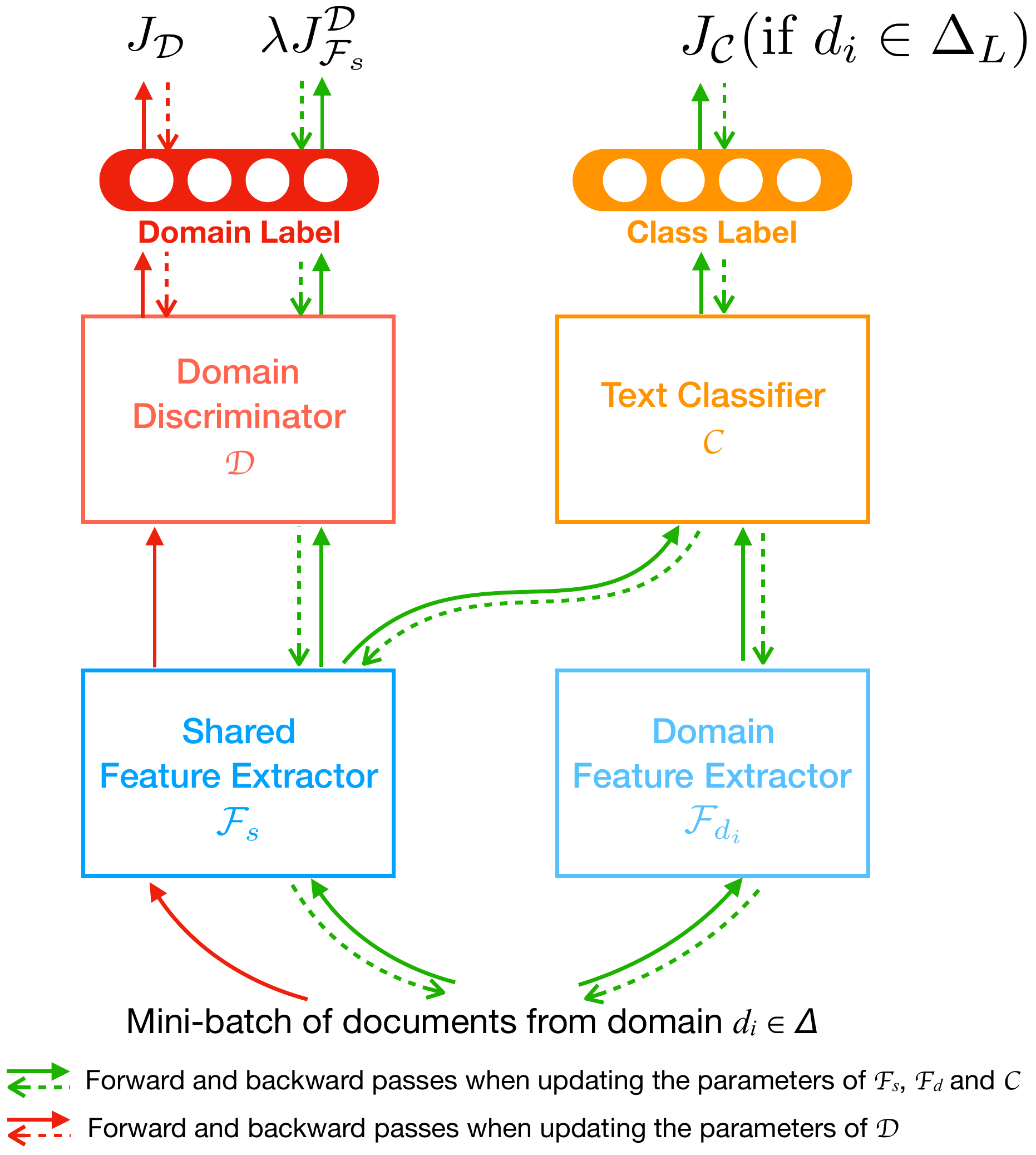}
    \vspace*{-7mm}
    \caption{
    \amdc{} for MDTC.
    The figure demonstrates the training on a mini-batch of data from one domain.
    One training iteration consists of one such mini-batch training from each domain.
    The parameters of $\cfs, \cfd, \cc$ are updated together, and the training flows are illustrated by the green arrows.
    The parameters of $\cd$ are updated separately, shown in red arrows.
    Solid lines indicate forward passes while dotted lines are backward passes.
    $J_{\cfs}^\cd$ is the domain loss for $\cfs$, which is anticorrelated with $J_\cd$ (e.g. $J_{\cfs}^\cd=-J_\cd$). (See \secref{sec:model},\secref{sec:model_proof})
    }
    \label{fig:amdc}
\end{figure}

As shown in Figure~\ref{fig:amdc}, the Multinomial Adversarial Network (\amdc{}) 
adopts the Shared-Private paradigm of \newcite{NIPS2016_6254} and
consists of four components: a \emph{shared feature extractor} $\cfs$, a \emph{domain feature extractor} $\cfdi$ for each labeled domain $d_i\in \Delta_L$, a \emph{text classifier} $\cc$, and finally a \emph{domain discriminator} $\cd$.
The main idea of \amdc{} is to explicitly model the domain-invariant features that are beneficial to the main classification task across all domains (i.e. \emph{shared features}, extracted by $\cfs$), as well as the domain-specific features that mainly contribute to the classification in its own domain (\emph{domain features}, extracted by $\cfd$).
Here, the adversarial domain discriminator $\cd$ has a multinomial output that takes a shared feature vector and predicts the likelihood of that sample coming from each domain.
As seen in Figure~\ref{fig:amdc} during the training flow of $\cfs$ (green arrows), $\cfs$ aims to confuse $\cd$ by minimizing $J_{\cfs}^\cd$ that is anticorrelated to $J_\cd$ (detailed in \secref{sec:model_training}), so that $\cd$ cannot predict the domain of a sample given its shared features.
The intuition is that if even a strong discriminator $\cd$ cannot tell the domain of a sample from the extracted features, those features $\cfs$ learned are essentially domain invariant.
By enforcing domain-invariant features to be learned by $\cfs$, when trained jointly via backpropagation, the set of domain features extractors $\cfd$ will each learn domain-specific features beneficial within its own domain.

The architecture of each component is relatively flexible, and can be decided by the practitioners to suit their particular classification tasks.
For instance, the feature extractors can adopt the form of Convolutional Neural Nets (CNN), Recurrent Neural Nets (RNN), or a Multi-Layer Perceptron (MLP), depending on the input data (See~\secref{sec:experiments}).
The input of \amdc{} will also be dependent on the feature extractor choice.
The output of a (shared/domain) feature extractor is a fixed-length vector, which is considered the (shared/domain) hidden features of some given input text.
On the other hand,
the outputs of $\cc$ and $\cd$ are label probabilities for class and domain prediction, respectively.
For example, both $\cc$ and $\cd$ can be MLPs with a softmax layer on top.
In~\secref{sec:model_proof}, we provide alternative architectures for $\cd$ and their mathematical implications.
We now present detailed descriptions of the \amdc{} training in \secref{sec:model_training} as well as the theoretical grounds in \secref{sec:model_proof}.

\begin{algorithm}[t]
\small
\begin{algorithmic}[1]
\Require
labeled corpus $\mathbb{X}$; unlabeled corpus $\mathbb{U}$; Hyperpamameter $\lambda > 0$, $k \in \mathbb{N}$
\Repeat
\LeftComment{$\cd$ iterations}
\For{$diter = 1$ to $k$}
\State $l_\cd = 0$
\ForAll{$d \in \Delta$}\Comment{For all $N$ domains}
\State Sample a mini-batch $\bm{x} \sim \mathbb{U}_{d}$
\State $\bm{f}_s = \cfs(\bm{x})$ \Comment{Shared feature vector}
\State $l_\cd \pluseq J_\cd(\cd(\bm{f}_s); d)$ \Comment{Accumulate $\cd$ loss}
\EndFor
\State Update $\cd$ parameters using $\nabla l_\cd$
\EndFor

\LeftComment{Main iteration}
\State $loss = 0$
\ForAll{$d \in \Delta_L$}\Comment{For all labeled domains}
\State Sample a mini-batch $(\bm{x},\bm{y})  \sim \mathbb{X}_{d}$
\State $\bm{f}_s = \cfs(\bm{x})$ 
\State $\bm{f}_d = \cfd(\bm{x})$ \Comment{Domain feature vector}
\State $loss \pluseq J_\cc(\cc(\bm{f}_s, \bm{f}_d); \bm{y})$ \Comment{Compute $\cc$ loss}
\EndFor

\ForAll{$d \in \Delta$}\Comment{For all $N$ domains}
\State Sample a mini-batch $\bm{x} \sim \mathbb{U}_{d}$
\State $\bm{f}_s = \cfs(\bm{x})$ 
\State $loss \pluseq \lambda\cdot J_{\cfs}^\cd(\cd(\bm{f}_s); d)$ \Comment{Domain loss of $\cfs$}
\EndFor
\State Update $\cfs$, $\cfd$, $\cc$ parameters using $\nabla loss$
\Until{convergence}

\end{algorithmic}
\caption{\amdc{} Training}
\label{alg:training}
\end{algorithm}

\subsection{Training}\label{sec:model_training}
Denote the annotated corpus in a labeled domain $d_i\in\Delta_L$ as $\mathbb{X}_{i}$;
and $(x, y)\sim \mathbb{X}_{i}$ is a sample drawn from the labeled data in domain $d_i$,
where $x$ is the input and $y$ is the task label.
On the other hand, for any domain $d_{i'}\in \Delta$,
denote the unlabeled corpus as $\mathbb{U}_{i'}$.
Note for a labeled domain, one can use a separate unlabeled corpus or simply use the labeled data (or use both).

In Figure~\ref{fig:amdc}, the arrows illustrate the training flows of various components.
Due to the adversarial nature of the domain discriminator $\cd$, it is trained with a separate optimizer (red arrows), while the rest of the networks are updated with the main optimizer (green arrows).
$\cc$ is only trained on labeled domains, and it takes  as input the concatenation of the shared and domain feature vectors.
At test time for unlabeled domains with no $\cfd$, the domain features are set to the $\bf 0$ vector for $\cc$'s input.
On the contrary, $\cd$ only takes the shared features as input, for both labeled and unlabeled domains.
The \amdc{} training is described in Algorithm~\ref{alg:training}.

In Algorithm~\ref{alg:training}, $\cl_\cc$ and $\cl_\cd$ are the loss functions of the text classifier $\cc$ and the domain discriminator $\cd$, respectively.
As mentioned in ~\secref{sec:model_arch}, $\cc$ has a $softmax$ layer on top for classification.
We hence adopt the canonical negative log-likelihood (NLL) loss:
\begin{align}
    \cl_\cc(\hat{y}, y) = -\log P(\hat{y}=y)\label{eqn:lc}
\end{align}
where $y$ is the true label and $\hat{y}$ is the $softmax$ predictions.
For $\cd$, we consider two variants of \amdc{}.
The first one is to use the NLL loss same as $\cc$ which suits the classification task;
while another option is to use the Least-Square (L2) loss that was shown to be able to alleviate the gradient vanishing problem when using the NLL loss in the adversarial setting~\cite{Mao_2017_ICCV}:
\begin{align}
    \cl^{NLL}_\cd(\hat{d}, d) &= -\log P(\hat{d}=d)\label{eqn:ld-nll}\\
    \cl^{L2}_\cd(\hat{d}, d) &= \sum_{i=1}^N (\hat{d}_i-\mathbbm{1}_{\{d=i\}})^2 \label{eqn:ld-l2}
\end{align}
where $d$ is the domain index of some sample and $\hat{d}$ is the prediction.
Without loss of generality, we normalize $\hat{d}$ so that $\sum_{i=1}^N\hat{d}_i = 1$ and $\forall i: \hat{d}_i\geq 0$.

Therefore, the objectives of $\cc$ and $\cd$ that we are minimizing are:
\begin{align}
    J_\cc &= \sum_{i=1}^N \expe_{(x,y)\sim\mathbb{X}_i} \left[ \cl_\cc(\cc(\cfs(x), \cfd(x)); y)\right]\label{eqn:jc}\\
    J_\cd &= \sum_{i=1}^N \expe_{x\sim\mathbb{U}_i} \left[ \cl_\cd(\cd(\cfs(x)); d)\right]\label{eqn:jd}
\end{align}

For the feature extractors, the training of domain feature extractors is straightforward, as their sole objective is to help $\cc$ perform better within their own domain.
Hence, $J_{\cfd} = J_\cc$ for any domain $d$.
Finally, the shared feature extractor $\cfs$ has two objectives:
to help $\cc$ achieve higher accuracy,
and to make the feature distribution invariant across all domains.
It thus leads to the following bipartite loss:
$$
    J_{\cfs} = J_{\cfs}^\cc + \lambda\cdot J_{\cfs}^\cd \nonumber
$$
where $\lambda$ is a hyperparameter balancing the two parts.
$J_{\cfs}^\cd$ is the domain loss of $\cfs$ anticorrelated to $J_\cd$:
\begin{align}
(NLL) J_{\cfs}^\cd&=-J_\cd\label{eqn:jfs-domain-nll}\\
(L2) J_{\cfs}^\cd &= \sum_{i=1}^N \expe_{x\sim\mathbb{U}_i} \left[\sum_{j=1}^N (\cd_j(\cfs(x))-\frac{1}{N})^2 \right]\label{eqn:jfs-domain-l2}
\end{align}
If $\cd$ adopts the NLL loss (\ref{eqn:jfs-domain-nll}), the domain loss is simply $-J_\cd$.
For the L2 loss (\ref{eqn:jfs-domain-l2}), $J_{\cfs}^\cd$ intuitively translates to pushing $\cd$ to make random predictions.
See~\secref{sec:model_proof} for theoretical justifications.

\section{Theories of Multinomial Adversarial Networks}\label{sec:model_proof}
The binomial adversarial nets are known to have theoretical connections to the minimization of various f-divergences between \emph{two} distributions~\cite{nowozin2016f}.
However, for adversarial training among multiple distributions, despite similar idea has been empirically experimented~\cite{P17-1001}, no theoretical justifications have been provided to our best knowledge.

In this section, we present a theoretical analysis showing the validity of \amdc{}.
In particular, we show that \amdc{}'s objective is equivalent to minimizing the total f-divergence between each of the shared feature distributions of the $N$ domains, and the centroid of the $N$ distributions.
The choice of loss function will determine which specific f-divergence is minimized.
Furthermore, with adequate model capacity, \amdc{} achieves its optimum for either loss function if and only if all $N$ shared feature distributions are identical,
hence learning an invariant feature space across all domains.

First consider the distribution of the shared features $\fvec$ for instances in each domain $d_i\in\Delta$:
\begin{equation}
    P_i(\fvec) \triangleq P(\fvec=\cfs(x) | x\in d_i)
\end{equation}

Combining (\ref{eqn:jd}) with the two loss functions (\ref{eqn:ld-nll}), (\ref{eqn:ld-l2}), the objective of $\cd$ can be written as:
\begin{align}
\label{eqn:lossd-nll}
J^{NLL}_\cd &= - \sum_{i=1}^{N} \expe_{\fvec\sim P_i} \left[ \log \cd_i(\fvec) \right] \\
J^{L2}_\cd &= \sum_{i=1}^{N} \expe_{\fvec\sim P_i} \left[ \sum_{j=1}^N (\cd_j(\fvec)-\mathbbm{1}_{\{i=j\}})^2 \right] \label{eqn:lossd-l2}
\end{align}
where $\cd_i(\fvec)$ is the $i$-th dimension of $\cd$'s (normalized) output vector, which conceptually corresponds to the probability of $\cd$ predicting that $\fvec$ is from domain $d_i$.

We first derive the optimal $\cd$ for any fixed $\cfs$.
\begin{lemma}
For any fixed $\cfs$, with either NLL or L2 loss, the optimum domain discriminator $\cd^*$ is:
\begin{equation}
\label{eqn:dstar}
\cd_i^*(\fvec) = \frac{P_i(\fvec)}{\sum_{j=1}^N P_j(\fvec)}
\end{equation}
\end{lemma}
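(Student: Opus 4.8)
The plan is to follow the pointwise-optimization strategy familiar from the analysis of binomial GANs~\cite{NIPS2014_5423}, adapting it to the multinomial, simplex-constrained setting. First I would rewrite each expectation $\expe_{\fvec\sim P_i}[\cdot]$ as an integral $\int P_i(\fvec)(\cdot)\,d\fvec$ and interchange the finite sum over domains with the integral, so that both $J^{NLL}_\cd$ and $J^{L2}_\cd$ take the form $\int g(\fvec,\cd(\fvec))\,d\fvec$ for an appropriate integrand $g$. The crucial observation is that, under the stated assumption of adequate model capacity, $\cd$ can realize an arbitrary (normalized) output vector at each point $\fvec$ independently; hence minimizing the integral reduces to minimizing the integrand $g$ pointwise, subject to the simplex constraints $\sum_{i=1}^N \cd_i(\fvec)=1$ and $\cd_i(\fvec)\geq 0$.

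For the NLL loss, the pointwise problem is to minimize $-\sum_{i=1}^N P_i(\fvec)\log\cd_i(\fvec)$ over the probability simplex. I would introduce a Lagrange multiplier for the equality constraint and set the partial derivatives to zero; solving the stationarity conditions together with the constraint yields $\cd_i(\fvec)=P_i(\fvec)/\sum_j P_j(\fvec)$, which automatically satisfies nonnegativity. Since $-\log$ is convex and the weights $P_i(\fvec)$ are nonnegative, the objective is convex in $\cd(\fvec)$, so this stationary point is the global minimizer.

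For the L2 loss I would first expand the squared terms and collect them, using $\sum_j \mathbbm{1}_{\{i=j\}}=1$, to rewrite the pointwise objective as $(\sum_i P_i(\fvec))\sum_j \cd_j(\fvec)^2 - 2\sum_j P_j(\fvec)\cd_j(\fvec) + \text{const}$. This is a convex quadratic in $\cd(\fvec)$ (the leading coefficient $\sum_i P_i(\fvec)$ is positive wherever the objective is nontrivial), so again a single Lagrange-multiplier computation locates the global minimum; the multiplier turns out to vanish, leaving $\cd_j(\fvec)=P_j(\fvec)/\sum_i P_i(\fvec)$ as claimed. The two loss functions thus share the same optimal discriminator.

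I expect the main subtleties to lie not in the calculus but in justifying the reduction itself. The pointwise argument depends on the sufficient-capacity assumption and on $\cd^*$ being left arbitrary on the null set where $\sum_j P_j(\fvec)=0$, which I would note does not affect either objective. I would also confirm in each case that the interior stationary point respects the inequality constraints $\cd_i\geq 0$, so that the KKT conditions collapse to the equality-constrained solution and no boundary analysis is needed.
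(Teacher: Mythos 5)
Your proposal is correct and follows essentially the same route as the paper's proof: reduce the objective to a pointwise integrand and solve the simplex-constrained minimization with a Lagrange multiplier, for both losses (with the multiplier vanishing in the L2 case). The additional care you take with convexity, the nonnegativity/KKT conditions, and the null set where $\sum_j P_j(\fvec)=0$ goes slightly beyond what the paper writes down, but it does not change the argument.
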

The proof involves an application of the \emph{Lagrangian Multiplier} to solve the minimum value of $J_\cd$, and the details can be found in the Appendix.
We then have the following main theorems for the domain loss for $\cfs$:
\begin{theorem}
Let $\overline{P} = \frac{\sum_{i=1}^N P_i}{N}$.
When $\cd$ is trained to its optimality, if $\cd$ adopts the \emph{NLL loss}:
\begin{align*}
J_{\cfs}^{\cd} &= -\min_{\theta_\cd} J_{\cd} = - J_{\cd^*} \\
&= -N\log N + N\cdot \mli{JSD}(P_1, P_2, \dots, P_N) \\
&= -N\log N + \sum_{i=1}^N \mli{KL}(P_i \| \overline{P})
\end{align*}
\end{theorem}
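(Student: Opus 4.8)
The plan is to evaluate the discriminator objective at its optimum $\cd^*$, which Lemma~1 already supplies in closed form, and then to recognize the resulting expression as a generalized $f$-divergence among the $N$ feature distributions. Since the NLL domain loss of $\cfs$ is \emph{defined} to be the exact negation of the discriminator objective, i.e. $J_{\cfs}^\cd = -J_\cd$ by (\ref{eqn:jfs-domain-nll}), and $\cd$ is driven to optimality, the first chain of equalities $J_{\cfs}^\cd = -\min_{\theta_\cd} J_\cd = -J_{\cd^*}$ holds essentially by definition together with Lemma~1. The substance of the theorem therefore lies entirely in rewriting $-J_{\cd^*}$ in the two closed forms on the right-hand side.

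First I would substitute $\cd_i^*(\fvec) = P_i(\fvec)/\sum_{j=1}^N P_j(\fvec)$ from (\ref{eqn:dstar}) directly into the NLL objective (\ref{eqn:lossd-nll}), obtaining
\[
J_{\cd^*} = -\sum_{i=1}^N \expe_{\fvec\sim P_i}\left[\log \frac{P_i(\fvec)}{\sum_{j=1}^N P_j(\fvec)}\right].
\]
The key algebraic move is then to introduce the centroid $\overline{P} = \frac{1}{N}\sum_{j=1}^N P_j$, so that $\sum_{j=1}^N P_j = N\overline{P}$, and to split the logarithm as $\log\frac{P_i}{N\overline{P}} = \log\frac{P_i}{\overline{P}} - \log N$. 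The constant $-\log N$ integrates against each normalized $P_i$ to contribute a total of $N\log N$ after summation over the $N$ domains, while the remaining terms collapse into $-\sum_{i=1}^N \mli{KL}(P_i\|\overline{P})$ by the definition of KL divergence. Negating yields $J_{\cfs}^\cd = -N\log N + \sum_{i=1}^N \mli{KL}(P_i\|\overline{P})$, which is precisely the last line of the statement.

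To recover the middle line I would invoke the standard $N$-way generalization of the Jensen--Shannon divergence, $\mli{JSD}(P_1,\dots,P_N) = \frac{1}{N}\sum_{i=1}^N \mli{KL}(P_i\|\overline{P})$, whence $\sum_{i=1}^N \mli{KL}(P_i\|\overline{P}) = N\cdot \mli{JSD}(P_1,\dots,P_N)$. Substituting this identity converts the KL form into the JSD form and closes the argument.

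The main obstacle is not the computation, which is routine once $\cd^*$ is in hand, but rather pinning down the $f$-divergence interpretation with the correct generalized definition: one must fix the uniform-weight, centroid-based multinomial JSD above and verify that it reduces to the classical two-distribution JSD when $N=2$, so that the claimed equivalence is a genuine generalization and not a mere renaming. A secondary point worth checking is that each $P_i$ is a normalized density over the shared feature space, so that $\expe_{\fvec\sim P_i}[\log N] = \log N$ and the constant terms aggregate cleanly to $N\log N$; this normalization is exactly what makes the $-\log N$ factoring legitimate.
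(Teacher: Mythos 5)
Your proposal is correct and follows essentially the same route as the paper's proof: substitute the optimal discriminator $\cd^*$ from Lemma~1 into the NLL objective, split $\log\frac{P_i}{\sum_j P_j}$ by adding and subtracting $\log N$ to expose $\overline{P}$, recognize the sum of $\mli{KL}(P_i\|\overline{P})$ terms, and invoke the centroid-based definition of the generalized Jensen--Shannon divergence. The normalization check you flag ($\expe_{\fvec\sim P_i}[\log N]=\log N$) is exactly the step the paper uses implicitly when it folds the $+\log N$ terms into $-N\log N$.
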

where $\mli{JSD}(\cdot)$ is the generalized Jensen-Shannon Divergence~\cite{61115} among multiple distributions,
defined as the average Kullback-Leibler divergence of each $P_i$ to the centroid $\overline{P}$~\cite{Aslam2007}.

\begin{theorem}
If $\cd$ uses the \emph{L2 loss}:
\begin{align*}
J_{\cfs}^{\cd} &= \sum_{i=1}^N \expe_{\fvec\sim P_i} \left[ \sum_{j=1}^N (\cd^*_j(\fvec)-\frac{1}{N})^2 \right] \\
&= \frac{1}{N} \sum_{i=1}^N \chi_{\mli{Neyman}}^2(P_i\| \overline{P})
\end{align*}
\end{theorem}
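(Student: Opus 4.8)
The plan is to mirror the strategy used for the NLL case in Theorem 1: substitute the optimal discriminator from Lemma 1 into the L2 domain loss and then simplify the resulting integral until it collapses to a sum of $f$-divergences measured against the centroid $\overline{P}$. Since the L2 domain loss (\ref{eqn:jfs-domain-l2}) is a standalone expression rather than simply $-J_\cd$, the whole computation is the evaluation of (\ref{eqn:jfs-domain-l2}) at the optimal $\cd=\cd^*$.

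First I would rewrite (\ref{eqn:jfs-domain-l2}) as an expectation over the shared-feature distributions, replacing $\expe_{x\sim\mathbb{U}_i}$ with $\expe_{\fvec\sim P_i}$ (this is just the pushforward of $\mathbb{U}_i$ under $\cfs$, by the definition of $P_i$) and setting $\cd=\cd^*$; this produces the first displayed line of the theorem. The crucial observation here is that the bracketed inner sum $\sum_{j=1}^N (\cd^*_j(\fvec)-\frac{1}{N})^2$ does \emph{not} depend on the outer index $i$. I would therefore interchange summation and integration, pulling the outer sum over $i$ inside so that the densities aggregate into $\sum_{i=1}^N P_i(\fvec) = N\overline{P}(\fvec)$. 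This converts the entire object into a single integral weighted by $N\overline{P}$.

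Next I would substitute the closed form $\cd^*_j(\fvec) = P_j(\fvec)/\sum_k P_k(\fvec) = P_j(\fvec)/(N\overline{P}(\fvec))$ from Lemma 1, so that $\cd^*_j(\fvec) - \frac{1}{N} = (P_j(\fvec) - \overline{P}(\fvec))/(N\overline{P}(\fvec))$. Squaring and multiplying by the $N\overline{P}(\fvec)$ weight cancels one power of $\overline{P}$ and leaves an overall prefactor of $1/N$. After moving the remaining finite sum over $j$ back outside the integral, I would be left with $\frac{1}{N}\sum_{j=1}^N \int (P_j(\fvec) - \overline{P}(\fvec))^2/\overline{P}(\fvec)\, d\fvec$, and I would identify each integral as the Neyman $\chi^2$-divergence $\chi^2_{\mli{Neyman}}(P_j\|\overline{P})$ under the paper's convention (centroid in the denominator), which finishes the argument.

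I expect the step that does the real work to be the interchange that collapses the outer sum into the centroid weight $N\overline{P}$: it is precisely the $i$-independence of the inner square that lets the per-domain weights $P_i$ combine cleanly, and without it the expression would not telescope into one divergence term per distribution. The rest is bookkeeping — tracking the factors of $N$ and the powers of $\overline{P}$ so the prefactor comes out to exactly $1/N$, and matching the final integral to the specific chi-squared variant named in the statement rather than its reverse.
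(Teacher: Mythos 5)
Your proposal is correct and follows essentially the same route as the paper's proof: substitute $\cd^*_j(\fvec)=P_j(\fvec)/(N\overline{P}(\fvec))$ from the lemma, exploit the fact that the inner square is independent of the outer index $i$ so that $\sum_i P_i$ collapses to $N\overline{P}$, and cancel one power of $\overline{P}$ to leave $\frac{1}{N}\sum_j \int (P_j-\overline{P})^2/\overline{P}\,d\fvec$. The bookkeeping of the $N$ factors matches the paper's derivation exactly.
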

where $\chi_{\mli{Neyman}}^2(\cdot\|\cdot)$ is the Neyman $\chi^2$ divergence~\cite{nielsen2014chi}.
The proof of both theorems can be found in the Appendix.

Consequently, by the non-negativity and joint convexity of the f-divergence~\cite{Csiszar:1982:ITC:601016}, we have:
\begin{corollary}
The optimum of $J_{\cfs}^\cd$ is $-N\log N$ when using NLL loss, and $0$ for the L2 loss.
The optimum value above is achieved if and only if $P_1 = P_2 = \dots = P_N = \overline{P}$ for either loss.
\end{corollary}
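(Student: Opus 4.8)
The plan is to argue entirely from the closed forms already proven in Theorems~1 and~2, viewing $J_{\cfs}^\cd$ as a function of the $N$ induced feature distributions $P_1,\dots,P_N$, which under the adequate-capacity assumption $\cfs$ can realize freely. In both cases $J_{\cfs}^\cd$ is an affine transform of a sum of f-divergences of each $P_i$ to the centroid $\overline{P}$: for the NLL loss it equals $-N\log N + \sum_{i=1}^N \mli{KL}(P_i\|\overline{P})$, and for the L2 loss $\tfrac{1}{N}\sum_{i=1}^N \chi_{\mli{Neyman}}^2(P_i\|\overline{P})$. The first step is then immediate: invoke non-negativity of each f-divergence term by term, so that $\sum_i \mli{KL}(P_i\|\overline{P})\ge 0$ and $\sum_i \chi_{\mli{Neyman}}^2(P_i\|\overline{P})\ge 0$, which yields the lower bounds $J_{\cfs}^\cd \ge -N\log N$ (NLL) and $J_{\cfs}^\cd\ge 0$ (L2).

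Next I would establish that these bounds are actually attained, rather than merely valid. A subtlety worth flagging is that $\overline{P}=\tfrac{1}{N}\sum_j P_j$ is itself a function of the $P_i$, so the sums above are not distances to a fixed reference; nevertheless non-negativity holds for every configuration, so the bounds are genuine. To exhibit attainment, take $P_1=\cdots=P_N$ equal to a common distribution $P$: then $\overline{P}=P$ and every divergence term vanishes, giving exactly $-N\log N$ (resp. $0$). Hence these values are the true optima of $J_{\cfs}^\cd$, not just lower bounds.

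Finally, for the \emph{if and only if} I would appeal to the equality case of the f-divergence. A sum of non-negative terms meets its infimum only when each term is zero, so attaining the optimum forces $\mli{KL}(P_i\|\overline{P})=0$ (resp. $\chi_{\mli{Neyman}}^2(P_i\|\overline{P})=0$) for every $i$; conversely, the all-equal configuration above supplies the reverse implication. The crux, and the step requiring the most care, is converting each vanishing divergence into an identity of distributions: this is where strict convexity of the generator at $t=1$ is needed — for $\mli{KL}$ (generator $t\log t$) and for the Neyman $\chi^2$ both generators are strictly convex at $1$, so $D_f(P_i\|\overline{P})=0$ forces $P_i=\overline{P}$ almost everywhere, and therefore all $P_i$ coincide with $\overline{P}$. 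Non-negativity alone gives the bound, but it is precisely this equality condition (equivalently, the strict convexity invoked in the corollary alongside non-negativity) that upgrades $D_f\ge 0$ to $D_f=0\Rightarrow P_i=\overline{P}$ and thereby pins down the optimizer.
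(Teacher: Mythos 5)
Your proof is correct and follows essentially the same route as the paper: both work from the closed forms in Theorems~1 and~2 and conclude via non-negativity of the f-divergences of each $P_i$ to the centroid $\overline{P}$ together with their equality condition. The only minor difference is that for the L2 case the paper establishes the lower bound via joint convexity, $\frac{1}{N}\sum_{i}\chi^2_{\mli{Neyman}}(P_i\|\overline{P}) \ge \chi^2_{\mli{Neyman}}(\overline{P}\|\overline{P}) = 0$, whereas you use term-wise non-negativity directly; your version is slightly more streamlined and also makes the \emph{only if} direction (each term must vanish, hence $P_i=\overline{P}$ by strict convexity of the generator at $t=1$) more explicit than the paper's brief assertion.
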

Therefore, the loss of $\cfs$ can be interpreted as simultaneously minimizing the classification loss $J_\cc$ as well as the divergence among feature distributions of all domains.
It can thus learn a shared feature mapping that are invariant across domains upon successful training while being beneficial to the main classification task.

\section{Experiments}\label{sec:experiments}

\subsection{Multi-Domain Text Classification}\label{sec:mdtc_exp}
In this experiment, we compare \amdc{} to state-of-the-art MDTC systems, on the multi-domain Amazon review dataset~\cite{P07-1056}, which is one of the most widely used MDTC datasets.
Note that this dataset was already preprocessed into a bag of features (unigrams and bigrams), losing all word order information.
This prohibits the usage of CNNs or RNNs as feature extractors, limiting the potential performance of the system.
Nonetheless, we adopt the same dataset for fair comparison and employ a MLP as our feature extractor.
In particular, we take the 5000 most frequent features and represent each review as a 5000d feature vector, where feature values are raw counts of the features.
Our MLP feature extractor would then have an input size of 5000 in order to process the reviews.

The Amazon dataset contains 2000 samples for each of the four domains: \emph{book, DVD, electronics}, and \emph{kitchen}, with binary labels (positive, negative).
Following \newcite{wu2015}, we conduct 5-way cross validation.
Three out of the five folds are treated as training set, one serves as the validation set, while the remaining being the test set.
The 5-fold average test accuracy is reported.

\begin{table}
\small
\centering
\begin{threeparttable}[t]
\begin{tabular}{@{\hspace{0.2em}}c@{\hspace{0.2em}}|@{\hspace{0.3em}}c@{\hspace{0.3em}}|@{\hspace{0.3em}}c@{\hspace{0.3em}}|@{\hspace{0.3em}}c@{\hspace{0.3em}}|@{\hspace{0.3em}}c@{\hspace{0.3em}}|@{\hspace{0.3em}}c@{\hspace{0.3em}}}
\hline
&	Book	 &	DVD	& Elec.   &   Kit. &   Avg. \\
\hline
\multicolumn{6}{c}{Domain-Specific Models Only}\\
\hline
LS  & 77.80 & 77.88 & 81.63 & 84.33 & 80.41 \\
SVM & 78.56 & 78.66 & 83.03 & 84.74 & 81.25 \\
LR  & 79.73 & 80.14 & 84.54 & 86.10 & 82.63 \\
\textbf{MLP} & 81.70 & 81.65 & 85.45 & 85.95 & 83.69 \\
\hline
\multicolumn{6}{c}{Shared Model Only}\\
\hline
LS  & 78.40 & 79.76 & 84.67 & 85.73 & 82.14 \\
SVM & 79.16 & 80.97 & 85.15 & 86.06 & 82.83 \\
LR  & 80.05 & 81.88 & 85.19 & 86.56 & 83.42 \\
\textbf{MLP}  & 82.40 & 82.15 & 85.90 & 88.20 & 84.66 \\
\textbf{\amdc{}-L2-MLP}  & 82.05 & 83.45 & 86.45 & 88.85 & 85.20 \\
\textbf{\amdc{}-NLL-MLP}  & 81.85 & 83.10 & 85.75 & 89.10 & 84.95 \\
\hline
\multicolumn{6}{c}{Shared-Private Models}\\
\hline
RMTL\tnote{1}     & 81.33 & 82.18 & 85.49 & 87.02 & 84.01 \\
MTLGraph\tnote{2} & 79.66 & 81.84 & 83.69 & 87.06 & 83.06 \\
CMSC-LS\tnote{3}  & 82.10 & 82.40 & 86.12 & 87.56 & 84.55 \\
CMSC-SVM\tnote{3} & 82.26 & 83.48 & 86.76 & 88.20 & 85.18 \\
CMSC-LR\tnote{3}  & 81.81 & 83.73 & 86.67 & 88.23 & 85.11 \\
\textbf{SP-MLP}  & 82.00 & \textbf{84.05} & 86.85 & 87.30 & 85.05 \\
\textbf{\amdc{}-L2-SP-MLP} &
\begin{minipage}{0.8cm}{\centering 82.46\\ \vspace{-1mm}{\tiny ($\pm 0.25$)}}\vspace{1mm}\end{minipage} &
\begin{minipage}{0.8cm}{\centering 83.98\\ \vspace{-1mm}{\tiny ($\pm 0.17$)}}\vspace{1mm}\end{minipage} &
\begin{minipage}{0.8cm}{\centering \textbf{87.22*}\\ \vspace{-1mm}{\tiny ($\pm 0.04$)}}\vspace{1mm}\end{minipage} &
\begin{minipage}{0.8cm}{\centering 88.53\\ \vspace{-1mm}{\tiny ($\pm 0.19$)}}\vspace{1mm}\end{minipage} &
\begin{minipage}{0.8cm}{\centering 85.55*\\ \vspace{-1mm}{\tiny ($\pm 0.07$)}}\vspace{1mm}\end{minipage} \\
\textbf{\amdc{}-NLL-SP-MLP} & 
\begin{minipage}{0.8cm}{\centering \textbf{82.98*}\\ \vspace{-1mm}{\tiny ($\pm 0.28$)}}\end{minipage} &
\begin{minipage}{0.8cm}{\centering 84.03\\ \vspace{-1mm}{\tiny ($\pm 0.16$)}}\end{minipage} &
\begin{minipage}{0.8cm}{\centering 87.06\\ \vspace{-1mm}{\tiny ($\pm 0.23$)}}\end{minipage} &
\begin{minipage}{0.8cm}{\centering \textbf{88.57*}\\ \vspace{-1mm}{\tiny ($\pm 0.15$)}}\end{minipage} &
\begin{minipage}{0.8cm}{\centering \textbf{85.66*}\\ \vspace{-1mm}{\tiny ($\pm 0.14$)}}\end{minipage} \\
\hline
\end{tabular}
\begin{tablenotes}
\item[1] \newcite{Evgeniou:2004:RML:1014052.1014067}
\item[2] \newcite{zhou2012mutal}
\item[3] \newcite{wu2015}
\end{tablenotes}
\caption{
MDTC results on the Amazon dataset.
Models in bold are ours while the performance of the rest is taken from \newcite{wu2015}.
Numbers in parentheses indicate standard errors, calculated based on 5 runs.
Bold numbers indicate the highest performance in each domain, and $*$ shows statistical significance ($p<0.05$) over CMSC under a one-sample T-Test.
}
\label{tab:amazon5fold}
\end{threeparttable}
\end{table}

Table~\ref{tab:amazon5fold} shows the main results.
Three types of models are shown:
\emph{Domain-Specific Models Only}, where only in-domain models are trained\footnote{For our models, it means $\cfs$ is disabled. Similarly, for Shared Model Only, no $\cfd$ is used.};
\emph{Shared Model Only}, where a single model is trained with all data;
and \emph{Shared-Private Models}, a combination of the previous two.
Within each category, various architectures are examined, such as Least Square (LS), SVM, and Logistic Regression (LR).
As explained before, we use MLP as our feature extractors for all our models (bold ones).
Among our models, the ones with the \amdc{} prefix use adversarial training, and \amdc{}-L2 and \amdc{}-NLL indicate the L2 loss and NLL loss \amdc{}, respectively.

From Table~\ref{tab:amazon5fold}, we can see that by adopting modern deep neural networks, our methods achieve superior performance within the first two model categories even without adversarial training.
This is corroborated by the fact that our SP-MLP model performs comparably to CMSC, while the latter relies on external resources such as sentiment lexica.
Moreover, when our multinomial adversarial nets are introduced, further improvement is observed.
With both loss functions, \amdc{} outperforms all Shared-Private baseline systems on each domain, and achieves statistically significantly higher overall performance.
For our \amdc{}-SP models, we provide the mean accuracy as well as the standard errors over five runs, to illustrate the performance variance and conduct significance test.
It can be seen that \amdc{}'s performance is relatively stable, and consistently outperforms CMSC.

\subsection{Experiments for Unlabeled Domains}\label{sec:multisource_exp}

As CMSC requires labeled data for each domain, their experiments were naturally designed this way.
In reality, however, many domains may not have any annotated corpora available.
It is therefore also important to look at the performance in these unlabeled domains for a MDTC system.
Fortunately, as depicted before, \amdc{}'s adversarial training only utilizes unlabeled data from each domain to learn the domain-invariant features, and can thus be used on unlabeled domains as well.
During testing, only the shared feature vector is fed into $\cc$, while the domain feature vector is set to $\bf 0$.

\begin{table}[t]
\small
\centering
\begin{threeparttable}
\begin{tabular}{@{\hspace{0.2em}}c@{\hspace{0.2em}}|@{\hspace{0.3em}}c@{\hspace{0.3em}}|@{\hspace{0.3em}}c@{\hspace{0.3em}}|@{\hspace{0.3em}}c@{\hspace{0.3em}}|@{\hspace{0.3em}}c@{\hspace{0.3em}}|@{\hspace{0.3em}}c@{\hspace{0.3em}}}
\hline
Target Domain	&	Book	 &	DVD	& Elec.   &   Kit. &   Avg. \\
\hline
MLP  & 76.55 & 75.88 & 84.60 & 85.45 & 80.46 \\
mSDA\tnote{1} & 76.98 & 78.61 & 81.98 & 84.26 & 80.46 \\
DANN\tnote{2}  & 77.89 & 78.86 & 84.91 & 86.39 & 82.01 \\
MDAN (H-MAX)\tnote{3}  & 78.45 & 77.97 & 84.83 & 85.80 & 81.76 \\
MDAN (S-MAX)\tnote{3}  & \textbf{78.63} & 80.65 & \textbf{85.34} & 86.26 & \textbf{82.72} \\
\textbf{\amdc{}-L2-SP-MLP}  & 78.45 & 81.57 & 83.37 & 85.57 & 82.24 \\
\textbf{\amdc{}-NLL-SP-MLP}  & 77.78 & \textbf{82.74} & 83.75 & \textbf{86.41} &  82.67\\
\hline
\end{tabular}
\begin{tablenotes}
\item[1] \newcite{ICML2012Chen_416}
\item[2] \newcite{ganin2016domain}
\item[3] \newcite{DBLP:journals/corr/ZhaoZWCMG17}
\end{tablenotes}
\end{threeparttable}
\caption{
Results on unlabeled domains.
Models in bold are our models while the rest is taken from \newcite{DBLP:journals/corr/ZhaoZWCMG17}.
Highest domain performance is shown in bold.
}
\label{tab:amazon-adapt}
\end{table}

In order to validate \amdc{}'s effectiveness, we compare to state-of-the-art \emph{multi-source domain adaptation} (MS-DA) methods (See \secref{sec:relatedwork}).
Compared to standard domain adaptation methods with one source and one target domain, MS-DA allows the adaptation from multiple source domains to a single target domain.
Analogically, MDTC can be viewed as \emph{multi-source multi-target} domain adaptation, which is superior when multiple target domains exist.
With multiple target domains, MS-DA will need to treat each one as an independent task, which is more expensive and cannot utilize the unlabeled data in other target domains.

In this work, we compare \amdc{} with one recent MS-DA method, MDAN~\cite{DBLP:journals/corr/ZhaoZWCMG17}.
Their experiments only have one target domain to suit their approach, and we follow this setting for fair comparison.
However, it is worth noting that \amdc{} is designed for the MDTC setting, and can deal with multiple target domains at the same time, which can potentially improve the performance by taking advantage of more unlabeled data from multiple target domains during adversarial training.
We adopt the same setting as~\newcite{DBLP:journals/corr/ZhaoZWCMG17}, which is based on the same multi-domain Amazon review dataset.
Each of the four domains in the dataset is treated as the target domain in four separate experiments, while the remaining three are used as source domains.

\begin{table*}[t]
\footnotesize
\centering
\begin{tabular}{@{\hspace{0.15em}}c@{\hspace{0.15em}}|@{\hspace{0.15em}}c@{\hspace{0.15em}}|@{\hspace{0.15em}}c@{\hspace{0.15em}}|@{\hspace{0.1em}}c@{\hspace{0.1em}}|@{\hspace{0.1em}}c@{\hspace{0.1em}}|@{\hspace{0.1em}}c@{\hspace{0.1em}}|@{\hspace{0.1em}}c@{\hspace{0.1em}}|@{\hspace{0.1em}}c@{\hspace{0.1em}}|@{\hspace{0.15em}}c@{\hspace{0.15em}}|@{\hspace{0.15em}}c@{\hspace{0.15em}}|@{\hspace{0.15em}}c@{\hspace{0.15em}}|@{\hspace{0.15em}}c@{\hspace{0.15em}}|@{\hspace{0.15em}}c@{\hspace{0.15em}}|@{\hspace{0.15em}}c@{\hspace{0.15em}}|@{\hspace{0.15em}}c@{\hspace{0.15em}}|@{\hspace{0.15em}}c@{\hspace{0.15em}}|@{\hspace{0.15em}}c@{\hspace{0.15em}}|@{\hspace{0.15em}}c@{\hspace{0.15em}}}
\hline
& \scriptsize books & \scriptsize elec. & \scriptsize dvd & \scriptsize kitchen &\scriptsize  apparel &\scriptsize  camera &\scriptsize  health &\scriptsize  music &\scriptsize  toys &\scriptsize  video &\scriptsize  baby &\scriptsize  magaz. &\scriptsize  softw. &\scriptsize  sports &\scriptsize  IMDb &\scriptsize  MR & \scriptsize Avg.\\
\hline
\multicolumn{18}{c}{Domain-Specific Models Only}\\
\hline
BiLSTM       & 81.0 & 78.5 & 80.5 & 81.2 & 86.0 & 86.0 & 78.7 & 77.2 & 84.7 & 83.7 & 83.5 & 91.5 & 85.7 & 84.0 & 85.0 & 74.7 & 82.6 \\
\textbf{CNN} & 85.3 & 87.8 & 76.3 & 84.5 & 86.3 & 89.0 & 87.5 & 81.5 & 87.0 & 82.3 & 82.5 & 86.8 & 87.5 & 85.3 & 83.3 & 75.5 & 84.3 \\
\hline
\multicolumn{18}{c}{Shared Model Only}\\
\hline
FS-MTL               & 82.5 & 85.7 & 83.5 & 86.0 & 84.5 & 86.5 & 88.0 & 81.2 & 84.5 & 83.7 & 88.0 & 92.5 & 86.2 & 85.5 & 82.5 & 74.7 & 84.7 \\
\textbf{\amdc{}-L2-CNN} & \textbf{88.3} & 88.3 & 87.8 & 88.5 & 85.3 & 90.5 & \textbf{90.8} & 85.3 & 89.5 & 89.0 & 89.5 & 91.3 & 88.3 & 89.5 & \textbf{88.5} & 73.8 & 87.7 \\
\textbf{\amdc{}-NLL-CNN} & 88.0 & 87.8 & 87.3 & 88.5 & 86.3 & 90.8 & 89.8 & 84.8 & 89.3 & 89.3 & 87.8 & 91.8 & 90.0 & \textbf{90.3} & 87.3 & 73.5 & 87.6 \\
\hline
\multicolumn{18}{c}{Shared-Private Models}\\
\hline
ASP-MTL                     & 84.0 & 86.8 & 85.5 & 86.2 & 87.0 & 89.2 & 88.2 & 82.5 & 88.0 & 84.5 & 88.2 & 92.2 & 87.2 & 85.7 & 85.5 & \textbf{76.7} & 86.1 \\
\multirow{2}{*}{\textbf{\amdc{}-L2-SP-CNN}}  & 87.6* & 87.4 & 88.1* & 89.8* & \textbf{87.6} & \textbf{91.4}* & 89.8* & \textbf{85.9}* & 90.0* & 89.5* & 90.0 & 92.5 & 90.4* & 89.0* & 86.6 & 76.1 & 88.2* \\[-1ex]
& \tiny $( 0.2)$ & \tiny $( 1.0)$ & \tiny $( 0.4)$ & \tiny $( 0.4)$ & \tiny $( 0.7)$ & \tiny $( 0.4)$ & \tiny $( 0.3)$ & \tiny $( 0.1)$ & \tiny $( 0.1)$ & \tiny $( 0.2)$ & \tiny $( 0.6)$ & \tiny $( 0.5)$ & \tiny $( 0.4)$ & \tiny $( 0.4)$ & \tiny $( 0.5)$ & \tiny $( 0.5)$ & \tiny $( 0.1)$ \\
\multirow{2}{*}{\textbf{\amdc{}-NLL-SP-CNN}} & 86.8* & \textbf{88.8} & \textbf{88.6}* & \textbf{89.9}* & \textbf{87.6} & 90.7 & 89.4 & 85.5* & \textbf{90.4}* & \textbf{89.6}* & \textbf{90.2} & \textbf{92.9} & \textbf{90.9}* & 89.0* & 87.0* & \textbf{76.7} & \textbf{88.4}* \\[-1ex]
& \tiny $( 0.4)$ & \tiny $( 0.6)$ & \tiny $( 0.4)$ & \tiny $( 0.4)$ & \tiny $( 0.4)$ & \tiny $( 0.4)$ & \tiny $( 0.3)$ & \tiny $( 0.1)$ & \tiny $( 0.2)$ & \tiny $( 0.3)$ & \tiny $( 0.6)$ & \tiny $( 0.4)$ & \tiny $( 0.7)$ & \tiny $( 0.2)$ & \tiny $( 0.1)$ & \tiny $( 0.8)$ & \tiny $( 0.1)$ \\
\hline
\end{tabular}
\caption{
Results on the FDU-MTL dataset.
Bolded models are ours, while the rest is from~\newcite{P17-1001}.
Highest performance is each domain is highlighted.
For our full \amdc{} models, standard errors are shown in parenthese and statistical significance ($p<0.01$) over ASP-MTL is indicated by *.
}
\label{tab:fdu_mtl}
\end{table*}

In Table~\ref{tab:amazon-adapt}, the target domain is shown on top, and the test set accuracy is reported for various systems.
It shows that \amdc{} outperforms several baseline systems, such as a MLP trained on the source-domains, as well as single-source domain adaptation methods such as mSDA~\cite{ICML2012Chen_416} and DANN~\cite{ganin2016domain}, where the training data in the multiple source domains are combined and viewed as a single domain.
Finally, when compared to MDAN, \amdc{} and MDAN each achieves higher accuracy on two out of the four target domains, and the average accuracy of \amdc{} is similar to MDAN.
Therefore, \amdc{} achieves competitive performance for the domains without annotated corpus.
Nevertheless, unlike MS-DA methods, \amdc{} can handle multiple target domains at one time.

\subsection{Experiments on the MTL Dataset}\label{sec:mtl_exp}

To make fair comparisons, the previous experiments follow the standard settings in the literature, where the widely adopted Amazon review dataset is used.
However, this dataset has a few limitations:
First, it has only four domains.
In addition, the reviews are already tokenized and converted to a bag of features consisting of unigrams and bigrams.
Raw review texts are hence not available in this dataset, making it impossible to use certain modern neural architectures such as CNNs and RNNs.
To provide more insights on how well \amdc{} work with other feature extractor architectures, we provide a third set of experiments on the FDU-MTL dataset~\cite{P17-1001}.
The dataset is created as a multi-task learning dataset with 16 \emph{tasks}, where each task is essentially a different domain of reviews.
It has 14 Amazon domains: books, electronics, DVD, kitchen, apparel, camera, health, music, toys, video, baby, magazine, software, and sports, in addition to two movies review domains from the IMDb and the MR dataset.
Each domain has a development set of 200 samples, and a test set of 400 samples.
The amount of training and unlabeled data vary across domains but are roughly 1400 and 2000, respectively.

We compare \amdc{} with ASP-MTL~\cite{P17-1001} on this FDU-MTL dataset.
ASP-MTL also adopts adversarial training for learning a shared feature space, and can be viewed as a special case of \amdc{} when adopting the NLL loss (\amdc{}-NLL).
Furthermore, while \newcite{P17-1001} do not provide any theoretically justifications, we in \secref{sec:model_proof} prove the validity of \amdc{} for not only the NLL loss, but an additional L2 loss.
Besides the theoretical superiority, we in this section show that \amdc{} also substantially outperforms ASP-MTL in practice due to the feature extractor choice. 

In particular, \newcite{P17-1001} choose LSTM as their feature extractor, yet we found CNN~\cite{D14-1181} to achieve much better accuracy while being $\sim 10$ times faster.
Indeed, as shown in Table~\ref{tab:fdu_mtl}, with or without adversarial training, our CNN models outperform LSTM ones by a large margin.
When \amdc{} is introduced, we attain the state-of-the-art performance on every domain with a 88.4\% overall accuracy, surpassing ASP-MTL by a significant margin of 2.3\%.

We hypothesize the reason LSTM performs much inferior to CNN is attributed to the lack of attention mechanism.
In ASP-MTL, only the last hidden unit is taken as the extracted features.
While LSTM is effective for representing the context for each token, it might not be powerful enough for directly encoding the entire document~\cite{DBLP:journals/corr/BahdanauCB14}.
Therefore, various attention mechanisms have been introduced on top of the vanilla LSTM to select words (and contexts) most relevant for making the predictions.
In our preliminary experiments, we find that Bi-directional LSTM with the dot-product attention~\cite{D15-1166} yields better performance than the vanilla LSTM in ASP-MTL.
However, it still does not outperform CNN and is much slower.
As a result, we conclude that, for text classification tasks, CNN is both effective and efficient in extracting local and higher-level features for making a single categorization.

Finally, we observe that \amdc{}-NLL achieves slightly higher overall performance compared to \amdc{}-L2, providing evidence for the claim in a recent study~\cite{2017arXiv171110337L} that the original GAN loss (NLL) may not be inherently inferior.
Moreover, the two variants excel in different domains, suggesting the possibility of further performance gain when using ensemble.
\section{Related Work}\label{sec:relatedwork}
\paragraph{Multi-Domain Text Classification}
The MDTC task was first examined by~\newcite{P08-2065}, who proposed to fusion the training data from multiple domains either on the feature level or the classifier level.
The prior art of MDTC~\cite{wu2015} decomposes the text classifier into a general one and a set of domain-specific ones. However, the general classifier is learned by parameter sharing and domain-specific knowledge may sneak into it.
They also require external resources to help improve accuracy and compute domain similarities.

\paragraph{Domain Adaptation}
Domain Adaptation attempts to transfer the knowledge from a source domain to a target one, and the traditional form is the \emph{single-source, single-target} (\textbf{SS,ST}) adaptation~\cite{W06-1615}.
Another variant is the \textbf{SS,MT} adaptation~\cite{N15-1069}, which tries to simultaneously transfer the knowledge to multiple target domains from a single source.
However, it cannot fully take advantage the training data if it comes from multiple source domains.
\textbf{MS,ST} adaptation~\cite{NIPS2008_3550,DBLP:journals/corr/ZhaoZWCMG17} can deal with multiple source domains but only transfers to a single target domain.
Therefore, when multiple target domains exist, they need to treat them as independent problems, which is more expensive and cannot utilize the additional unlabeled data in these domains.
Finally, MDTC can be viewed as \textbf{MS,MT} adaptation, which is arguably more general and realistic.

\paragraph{Adversarial Networks}
The idea of adversarial networks was proposed by~\newcite{NIPS2014_5423} for image generation, and has been applied to various NLP tasks as well~\cite{2016arXiv160601614C,D17-1230}.
\newcite{ganin2016domain} first used it for the \textbf{SS,ST} domain adaptation followed by many others.
\newcite{NIPS2016_6254} utilized adversarial training in a shared-private model for domain adaptation to learn domain-invariant features, but still focused on the \textbf{SS,ST} setting.
Finally, the idea of using adversarial nets to discriminate over multiple distributions was empirically explored by a very recent work~\cite{P17-1001} under the multi-task learning setting, and can be considered as a special case of our \amdc{} framework with the NLL domain loss.
Nevertheless, we propose a more general framework with alternative architectures for the adversarial component, and for the first time provide theoretical justifications for the multinomial adversarial nets.
Moreover, \newcite{P17-1001} used LSTM without attention as their feature extractor, which we found to perform sub-optimal in the experiments. We instead chose Convolutional Neural Nets as our feature extractor that achieves higher accuracy while running an order of magnitude faster (See \secref{sec:mtl_exp}).
\section{Conclusion}
In this work, we propose a family of Multinomial Adversarial Networks (\amdc{}) that generalize the traditional binomial adversarial nets in the sense that \amdc{} can simultaneously minimize the difference among multiple probability distributions instead of two.
We provide theoretical justifications for two instances of \amdc{}, \amdc{}-NLL and \amdc{}-L2, showing they are minimizers of two different f-divergence metrics among multiple distributions, respectively.
This indicates \amdc{} can be used to make multiple distributions indistinguishable from one another.
It can hence be applied to a variety of tasks, similar to the versatile binomial adversarial nets, which have been used in many areas for making \emph{two} distributions alike.

We in this paper design a \amdc{} model for the MDTC task, following the shared-private paradigm that has a shared feature extractor to learn domain-invariant features and domain feature extractors to learn domain-specific ones.
\amdc{} is used to enforce the shared feature extractor to learn only domain-invariant knowledge, by resorting to \amdc{}'s power of making indistinguishable the shared feature distributions of samples from each domain.
We conduct extensive experiments, demonstrating our \amdc{} model outperforms the prior art systems in MDTC, and achieves state-of-the-art performance on domains without labeled data when compared to multi-source domain adaptation methods.

\bibliography{amdc}
\bibliographystyle{acl_natbib}

\newpage
\onecolumn
\begin{appendices}
\section{Proofs}
\subsection{Proofs for \amdc{}-NLL}

Assume we have $N$ domains, consider the distribution of the shared features $\cfs$ for instances in each domain $d_i$:
\begin{equation*}
    P_i(\fvec) \triangleq P(\fvec=\cfs(x) | x\in d_i)
\end{equation*}

The objective that $\cd$ attempts to minimize is:
\begin{equation}
\label{apd-eqn:lossd-nll}
J_\cd = - \sum_{i=1}^{N} \expe_{\fvec\sim P_i} \left[ \log \cd_i(\fvec) \right]
\end{equation}
where $\cd_i(\fvec)$ is the $i$-th dimension of $\cd$'s output vector, which conceptually corresponds to the softmax probability of $\cd$ predicting that $\fvec$ is from domain $d_i$.
We therefore have property that for any $\fvec$:
\begin{equation}
\label{apd-eqn:sumd-nll}
\sum_{i=1}^N \cd_i(\fvec) = 1
\end{equation}

\begin{lemma}
For any fixed $\cfs$, the optimum domain discriminator $\cd^*$ is:
\begin{equation}
\label{apd-eqn:dstar-nll}
\cd_i^*(\fvec) = \frac{P_i(\fvec)}{\sum_{j=1}^N P_j(\fvec)}
\end{equation}
\end{lemma}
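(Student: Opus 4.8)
The plan is to reduce the functional minimization of $J_\cd$ over the (full-capacity) discriminator $\cd$ to a family of independent finite-dimensional constrained problems, one for each value of $\fvec$. First I would rewrite the objective in (\ref{apd-eqn:lossd-nll}) as a single integral,
\begin{equation*}
J_\cd = -\int \sum_{i=1}^N P_i(\fvec)\log \cd_i(\fvec)\, d\fvec,
\end{equation*}
by expanding each expectation $\expe_{\fvec\sim P_i}[\cdot]$ as $\int P_i(\fvec)\,(\cdot)\,d\fvec$ and interchanging the finite sum with the integral. The only constraint on $\cd$ is the normalization (\ref{apd-eqn:sumd-nll}), namely $\sum_{i=1}^N \cd_i(\fvec) = 1$, which couples only the $N$ output components at a single $\fvec$ and imposes no relation between the values of $\cd$ at distinct points. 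Since $\cfs$ is fixed, the integrand at each $\fvec$ can therefore be optimized independently, and minimizing it pointwise yields a global minimizer of $J_\cd$.

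Next, for a fixed $\fvec$ I would treat $(\cd_1(\fvec),\dots,\cd_N(\fvec))$ as $N$ free variables and minimize $-\sum_i P_i(\fvec)\log \cd_i(\fvec)$ subject to $\sum_i \cd_i(\fvec)=1$ via a Lagrange multiplier. Forming
\begin{equation*}
L = -\sum_{i=1}^N P_i(\fvec)\log \cd_i(\fvec) + \lambda\Bigl(\sum_{i=1}^N \cd_i(\fvec) - 1\Bigr),
\end{equation*}
and setting $\partial L/\partial \cd_i(\fvec) = -P_i(\fvec)/\cd_i(\fvec) + \lambda = 0$ gives $\cd_i(\fvec) = P_i(\fvec)/\lambda$. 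Substituting back into the normalization constraint fixes $\lambda = \sum_{j=1}^N P_j(\fvec)$, which produces exactly the claimed $\cd_i^*(\fvec) = P_i(\fvec)/\sum_{j=1}^N P_j(\fvec)$.

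Finally I would confirm that this stationary point is a minimum rather than a saddle or maximum. Because $-\log$ is strictly convex and each $P_i(\fvec)\ge 0$, the integrand is a convex function of the vector $(\cd_1(\fvec),\dots,\cd_N(\fvec))$ over the probability simplex, so the unique critical point of the Lagrangian is the global minimizer of the constrained problem. The step I expect to need the most care is the justification in the first paragraph that pointwise optimization of the integrand is legitimate: it relies on the \emph{adequate model capacity} assumption, under which $\cd$ can realize any normalized output at each $\fvec$ independently, so that interchanging the minimization with the integral loses no cross-point constraint.
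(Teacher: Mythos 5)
Your proposal is correct and follows essentially the same route as the paper's proof: rewrite $J_\cd$ as a single integral, optimize the integrand pointwise under the simplex constraint via a Lagrange multiplier, and solve to obtain $\cd_i^* = P_i/\sum_j P_j$. The only difference is that you are somewhat more careful than the paper, explicitly solving for the multiplier $\lambda$ and verifying via convexity of $-\log$ that the stationary point is the global minimizer, steps the paper leaves implicit.
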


\begin{proof}
For a fixed $\cfs$, the optimum
\begin{align*}
    \cd^* = \argmin_\cd J_\cd &= \argmin_\cd -\sum_{i=1}^{N} \expe_{\fvec\sim P_i} \left[ \log \cd_i(\fvec) \right]\\
    &= \argmax_\cd \sum_{i=1}^N \int_\fvec P_i(\fvec) \log \cd_i(\fvec) d\fvec \\
    &= \argmax_\cd \int_\fvec \sum_{i=1}^N P_i(\fvec) \log \cd_i(\fvec) d\fvec
\end{align*}
We employ the Lagrangian Multiplier to derive $\argmax_\cd \sum_{i=1}^N P_i(\fvec) \log \cd_i(\fvec)$ under the constraint of (\ref{apd-eqn:sumd-nll}).
Let
\begin{align*}
    L(\cd_1, \dots, \cd_N, \lambda) &= \sum_{i=1}^N P_i\log \cd_i - \lambda(\sum_{i=1}^N \cd_i - 1)
\end{align*}
Let $\nabla L = 0$:
\begin{align*}
\begin{cases}
\nabla_{\cd_i}\sum_{j=1}^N P_j\log \cd_j = \lambda \nabla_{\cd_i}(\sum_{j=1}^N \cd_j - 1) & (\forall i)\\
\sum_{i=1}^N \cd_i - 1 = 0
\end{cases}
\end{align*}
Solving the two equations, we have:
$$
\cd_i^*(\fvec) = \frac{P_i(\fvec)}{\sum_{j=1}^N P_j(\fvec)}
$$

\end{proof}

On the other hand, the loss function of the shared feature extractor $\cfs$ consists of two additive components, the loss from the text classifier $\cc$, and the loss from the domain discriminator $\cd$:
\begin{equation}
\label{apd-eqn:lossfs}
J_{\cfs} = J_{\cfs}^{\cc} + \lambda J_{\cfs}^{\cd} \triangleq J_\cc - \lambda J_\cd
\end{equation}

We have the following theorem for the domain loss for $\cfs$:
\begin{theorem}
When $\cd$ is trained to its optimality:
\begin{equation}
\label{apd-eqn:fdloss-jsd}
J_{\cfs}^{D} = -J_{\cd^*} = -N\log N + N\cdot \mli{JSD}(P_1, P_2, \dots, P_N)
\end{equation}
\end{theorem}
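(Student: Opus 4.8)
The plan is to obtain the claimed identity by a direct substitution of the optimal discriminator from Lemma~1 into $J_\cd$, followed by elementary manipulation. Since the theorem asserts $J_{\cfs}^{\cd} = -J_{\cd^*}$ (this is exactly the NLL domain loss evaluated at optimality, cf.~\ref{apd-eqn:lossfs}), it suffices to evaluate $J_{\cd^*}$ in closed form and negate it.

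First I would plug $\cd_i^*(\fvec) = P_i(\fvec)/\sum_{j=1}^N P_j(\fvec)$ into $J_\cd = -\sum_{i=1}^N \expe_{\fvec\sim P_i}[\log \cd_i(\fvec)]$. The central observation is that the denominator equals $N\overline{P}$, because $\overline{P} = \frac{1}{N}\sum_{j=1}^N P_j$ by definition; hence the argument of each logarithm is $P_i/(N\overline{P})$. I would then split $\log\frac{P_i}{N\overline{P}} = \log\frac{P_i}{\overline{P}} - \log N$, which separates $J_{\cd^*}$ into two additive pieces.

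The first piece is $-\sum_{i=1}^N \expe_{\fvec\sim P_i}[\log\frac{P_i}{\overline{P}}] = -\sum_{i=1}^N \mli{KL}(P_i\|\overline{P})$, directly by the definition of the Kullback--Leibler divergence. The second piece is $\sum_{i=1}^N \expe_{\fvec\sim P_i}[\log N] = N\log N$, since the expectation of the constant $\log N$ is $\log N$ and there are $N$ domains. Combining, $J_{\cd^*} = N\log N - \sum_{i=1}^N \mli{KL}(P_i\|\overline{P})$, so $-J_{\cd^*} = -N\log N + \sum_{i=1}^N \mli{KL}(P_i\|\overline{P})$, which is the second line of the displayed claim. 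The final equality to $-N\log N + N\cdot\mli{JSD}(P_1,\dots,P_N)$ is then immediate from the definition of the generalized Jensen--Shannon divergence as the average KL divergence of each $P_i$ to the centroid, namely $\mli{JSD}(P_1,\dots,P_N) = \frac{1}{N}\sum_{i=1}^N \mli{KL}(P_i\|\overline{P})$.

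There is no substantive obstacle here: the argument is a substitution plus a splitting of logarithms and recognition of the KL divergence. The only points demanding care are correctly extracting the factor $\log N$ arising from the centroid's normalization (an off-by-$N$ error here would spoil the $-N\log N$ term), and quoting the precise definition of the multi-distribution JSD used in the paper so that the last rewriting is a definitional identity rather than an additional lemma.
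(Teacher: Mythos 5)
Your proposal is correct and follows essentially the same route as the paper's proof: substitute the optimal discriminator $\cd_i^* = P_i/(N\overline{P})$ into $-J_{\cd^*}$, separate out the $\log N$ normalization term (the paper adds and subtracts $N\log N$ where you split the logarithm, which is the same manipulation), recognize the remaining sum as $\sum_i \mli{KL}(P_i\|\overline{P})$, and invoke the Aslam--Pavlu characterization of the generalized JSD as the average KL divergence to the centroid. No gaps.
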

where $\mli{JSD}(\cdot)$ is the generalized Jensen-Shannon Divergence~\cite{61115} among multiple distributions.
\begin{proof}
Let $\overline{P} = \frac{\sum_{i=1}^N P_i}{N}$.

There are two equivalent definitions of the generalized Jensen-Shannon divergence: the original definition based on Shannon entropy~\cite{61115}, and a reshaped one expressed as the average Kullback-Leibler divergence of each $P_i$ to the centroid $\overline{P}$~\cite{Aslam2007}.
We adopt the latter one here:
\begin{equation}
\mli{JSD}(P_1, P_2, \dots, P_N) \triangleq \frac{1}{N} \sum_{i=1}^N \mli{KL}(P_i \| \overline{P})
= \frac{1}{N} \sum_{i=1}^N \expe_{\fvec\sim P_i}\left[ \log \frac{P_i(\fvec)}{\overline{P}(\fvec)} \right]
\end{equation}

Now substituting $\cd^*$ into $J_{\cfs}^\cd$:
\begin{align*}
J_{\cfs}^{\cd} = -J_{\cd^*}
&= \sum_{i=1}^{N} \expe_{\fvec\sim P_i} \left[ \log \cd^*_i(\fvec) \right] \\
&= \sum_{i=1}^{N} \expe_{\fvec\sim P_i} \left[ \log \frac{P_i(\fvec)}{\sum_{j=1}^N P_j(\fvec)} \right] \\
&= -N\log N + \sum_{i=1}^{N} \expe_{\fvec\sim P_i} \left[ \log \frac{P_i(\fvec)}{\sum_{j=1}^N P_j(\fvec)} + \log N \right] \\
&= -N\log N + \sum_{i=1}^{N} \expe_{\fvec\sim P_i} \left[ \log \frac{P_i(\fvec)}{\frac{\sum_{j=1}^N P_j(\fvec)}{N}} \right] \\
&= -N\log N + \sum_{i=1}^{N} \expe_{\fvec\sim P_i} \left[ \log \frac{P_i(\fvec)}{\overline{P}} \right] \\
&= -N\log N + \sum_{i=1}^{N} \mli{KL}(P_i\|\overline{P}) \\
&= -N\log N + N\cdot \mli{JSD}(P_1, P_2, \dots, P_N)
\end{align*}
\end{proof}

Consequently, by the non-negativity of $\mli{JSD}$~\cite{61115}, we have the following corollary:
\begin{corollary}
The optimum of $J_{\cfs}^\cd$ is $-N\log N$, and is achieved if and only if $P_1 = P_2 = \dots = P_N = \overline{P}$.
\end{corollary}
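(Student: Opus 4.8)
The plan is to read the result off directly from the preceding theorem, which already rewrites the domain loss as $J_{\cfs}^\cd = -N\log N + N\cdot \mli{JSD}(P_1, \dots, P_N)$. Since $N\log N$ is a constant independent of the feature distributions $P_i$, minimizing $J_{\cfs}^\cd$ over $\cfs$ (equivalently, over the induced distributions $P_1, \dots, P_N$) amounts to minimizing the generalized Jensen--Shannon divergence $\mli{JSD}(P_1, \dots, P_N)$. So the first task is to establish the lower bound $\mli{JSD} \geq 0$, from which the value $J_{\cfs}^\cd \geq -N\log N$ of the minimum follows immediately.

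First I would recall the representation $\mli{JSD}(P_1, \dots, P_N) = \frac{1}{N}\sum_{i=1}^N \mli{KL}(P_i \| \overline{P})$ already used in the theorem. Each summand is a Kullback--Leibler divergence, which is non-negative by Gibbs' inequality (equivalently, by Jensen's inequality applied to the convex function $-\log$). Being a non-negative combination of non-negative terms, $\mli{JSD} \geq 0$, giving the bound $J_{\cfs}^\cd \geq -N\log N$. The forward direction of the ``if and only if'' is then immediate: substituting $P_1 = \dots = P_N = \overline{P}$ makes every $\mli{KL}(P_i \| \overline{P})$ vanish, so the bound is attained and equals $-N\log N$.

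The only genuine content is the reverse direction of the equality characterization, which rests on the tightness condition for Gibbs' inequality, namely that $\mli{KL}(P \| Q) = 0$ if and only if $P = Q$ almost everywhere. I would argue that if $J_{\cfs}^\cd = -N\log N$, then $\sum_{i=1}^N \mli{KL}(P_i \| \overline{P}) = 0$; since each term is non-negative, every term must individually vanish, forcing $P_i = \overline{P}$ (a.e.) for all $i$. In particular all the $P_i$ coincide, and their common value equals their own average $\overline{P}$, which is exactly the stated condition $P_1 = P_2 = \dots = P_N = \overline{P}$.

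I do not anticipate a serious obstacle: the heavy lifting is done in the theorem that recasts the loss in terms of $\mli{JSD}$ and $\mli{KL}$, and the remainder is the standard non-negativity-plus-equality statement for the KL divergence. The one point requiring care is justifying the equality case rigorously, rather than merely invoking non-negativity of $\mli{JSD}$ as the surrounding text does: one must note the ``almost everywhere'' qualification and the elementary fact that a finite sum of non-negative reals is zero only when each summand is zero.
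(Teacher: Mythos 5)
Your proposal is correct and follows essentially the same route as the paper: the paper's corollary is likewise read off from the theorem's identity $J_{\cfs}^{\cd} = -N\log N + \sum_i \mli{KL}(P_i\|\overline{P})$ by invoking the non-negativity of the generalized Jensen--Shannon divergence, which is exactly the KL-based argument you spell out. You merely make explicit the details the paper leaves to the citation (Gibbs' inequality, the almost-everywhere equality condition, and that a sum of non-negative terms vanishes only termwise), which is a welcome but not substantively different elaboration.
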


\subsection{Proofs for \amdc{}-L2}
The proof is similar for \amdc{} with the L2 loss.
The loss function used by $\cd$ is, for a sample from domain $d_i$ with shared feature vector $\fvec$:
\begin{equation}
\label{apd-eqn:lossd-l2}
\cl_\cd(\cd(\fvec), i) = \sum_{j=1}^N (\cd_j(\fvec)-\mathbbm{1}_{\{i=j\}})^2
\end{equation}
So the objective that $\cd$ minimizes is:
\begin{equation}
\label{apd-eqn:jd}
J_\cd = \sum_{i=1}^{N} \expe_{\fvec\sim P_i} \left[ \sum_{j=1}^N (\cd_j(\fvec)-\mathbbm{1}_{\{i=j\}})^2 \right]
\end{equation}
For simplicity, we further constrain $\cd$'s outputs to be on a simplex:
\begin{equation}
\label{apd-eqn:sumd-l2}
\sum_{i=1}^N \cd_i(\fvec) = 1 \quad (\forall \fvec)
\end{equation}

\begin{lemma}
For any fixed $\cfs$, the optimum domain discriminator $\cd^*$ is:
\begin{equation}
\label{apd-eqn:dstar-l2}
\cd_i^*(\fvec) = \frac{P_i(\fvec)}{\sum_{j=1}^N P_j(\fvec)}
\end{equation}
\end{lemma}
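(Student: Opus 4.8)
The plan is to mirror the \amdc{}-NLL proof: reduce the minimization of $J_\cd$ to a pointwise problem at each fixed $\fvec$, then solve the resulting finite-dimensional constrained optimization with a Lagrange multiplier. First I would rewrite the objective (\ref{apd-eqn:jd}) as an integral,
\begin{equation*}
J_\cd = \int_\fvec \sum_{i=1}^N P_i(\fvec) \sum_{j=1}^N (\cd_j(\fvec) - \mathbbm{1}_{\{i=j\}})^2 \, d\fvec,
\end{equation*}
and observe that since the integrand is non-negative and the simplex constraint (\ref{apd-eqn:sumd-l2}) is imposed separately at each $\fvec$, it suffices to minimize the inner expression pointwise.

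Fixing $\fvec$ and abbreviating $p_i = P_i(\fvec)$, $q_j = \cd_j(\fvec)$, and $P = \sum_{i=1}^N p_i$, I would swap the order of summation and expand the square. The key simplification is that for each fixed $j$ the term $\sum_{i=1}^N p_i (q_j - \mathbbm{1}_{\{i=j\}})^2$ collapses to $P q_j^2 - 2 p_j q_j + p_j$, so the whole integrand reduces to the clean quadratic $G(q_1,\dots,q_N) = P \sum_{j=1}^N q_j^2 - 2\sum_{j=1}^N p_j q_j + P$. This collapse is the one computational step worth doing carefully; everything after it is routine.

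With $G$ in hand I would form the Lagrangian $L = G - \lambda(\sum_j q_j - 1)$, set $\partial L / \partial q_j = 2 P q_j - 2 p_j - \lambda = 0$, and solve to get $q_j = p_j/P + \lambda/(2P)$. Substituting into the constraint $\sum_j q_j = 1$ forces $\lambda = 0$, which yields $q_j^* = p_j/P$, i.e.\ exactly (\ref{apd-eqn:dstar-l2}). Since the Hessian of $G$ equals $2P$ times the identity (a positive definite matrix whenever $P > 0$), $G$ is strictly convex, so this stationary point is the unique global minimizer; I would also note that $q_j^* \geq 0$ holds automatically because every $p_j \geq 0$, so the recovered $\cd^*$ is a legitimate output on the simplex.

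I do not anticipate any real obstacle: unlike the NLL case, the objective is already a smooth quadratic, so convexity is immediate and the stationarity condition is linear in $q$. The only points demanding attention are the algebraic collapse of the double sum described above and the (essentially trivial) verification that the multiplier vanishes, so that the L2 minimizer coincides with the NLL optimum---a coincidence that lets the substitution of $\cd^*$ in the subsequent L2 theorem proceed unchanged.
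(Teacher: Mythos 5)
Your proposal is correct and follows essentially the same route as the paper: reduce $J_\cd$ to a pointwise minimization over the simplex at each $\fvec$, apply a Lagrange multiplier, observe that the constraint forces $\lambda = 0$, and recover $\cd_i^* = P_i/\sum_j P_j$. Your explicit expansion of the double sum into the quadratic $P\sum_j q_j^2 - 2\sum_j p_j q_j + P$ and the strict-convexity remark are slightly more detailed than the paper's derivation (which states the stationarity conditions directly), but the argument is the same.
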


\begin{proof}
For a fixed $\cfs$, the optimum
\begin{align*}
    \cd^* = \argmin_\cd J_\cd &= \argmin_\cd \sum_{i=1}^{N} \expe_{\fvec\sim P_i} \left[ \cl_\cd(\cd(\fvec), i) \right]\\
    &= \argmin_\cd \sum_{i=1}^N \int_\fvec P_i(\fvec) \cl_\cd(\cd(\fvec), i) d\fvec \\
    &= \argmin_\cd \int_\fvec \sum_{i=1}^N P_i(\fvec) \sum_{j=1}^N (\cd_j(\fvec)-\mathbbm{1}_{\{i=j\}})^2 d\fvec
\end{align*}
Similar to \amdc{}-NLL, we employ the Lagrangian Multiplier to derive $\argmax_\cd \sum_{i=1}^N P_i(\fvec)\sum_{j=1}^N (\cd_j(\fvec)-\mathbbm{1}_{\{i=j\}})^2 $ under the constraint of (\ref{apd-eqn:sumd-l2}).
Let $\nabla L = 0$:
\begin{align*}
\begin{cases}
2((\sum_{j=1}^N P_j)\cd_i - P_i) = \lambda & (\forall i)\\
\sum_{i=1}^N \cd_i - 1 = 0
\end{cases}
\end{align*}
Solving the two equations, we have $\lambda=0$ and:
$$
\cd_i^*(\fvec) = \frac{P_i(\fvec)}{\sum_{j=1}^N P_j(\fvec)}
$$

\end{proof}


For the domain loss of $\cfs$:
\begin{theorem}
Let $\overline{P} = \frac{\sum_{i=1}^N P_i}{N}$.
When $\cd$ is trained to its optimality:
\begin{align}
\label{apd-eqn:fdloss-chi2}
J_{\cfs}^{\cd} &=  \sum_{i=1}^N \expe_{\fvec\sim P_i} \left[ \sum_{j=1}^N (D_j(\fvec)-\frac{1}{N})^2 \right] \nonumber\\
&= \frac{1}{N} \sum_{i=1}^N \chi_{\mli{Neyman}}^2(P_i\| \overline{P})
\end{align}
\end{theorem}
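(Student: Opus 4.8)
The plan is to substitute the optimal discriminator from the preceding lemma directly into the domain loss and simplify until a per-distribution $\chi^2$ term emerges. First I would recall that for the L2 loss the optimum is $\cd_j^*(\fvec) = P_j(\fvec)/\sum_{k=1}^N P_k(\fvec)$, and that by definition $\sum_{k=1}^N P_k = N\overline{P}$. Plugging these in, the key algebraic observation is that each deviation term factors as $\cd_j^*(\fvec) - \frac{1}{N} = \frac{P_j(\fvec) - \overline{P}(\fvec)}{N\overline{P}(\fvec)}$, so that the inner sum over $j$ becomes $\frac{1}{N^2\overline{P}(\fvec)^2}\sum_{j=1}^N (P_j(\fvec) - \overline{P}(\fvec))^2$.

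The crucial step is to notice that this inner sum no longer depends on the outer index $i$. Writing each expectation $\expe_{\fvec\sim P_i}[\cdot]$ as an integral $\int P_i(\fvec)\,(\cdot)\,d\fvec$ and exchanging the order of the two summations, the only $i$-dependence left is the integration weight $P_i(\fvec)$, and summing it gives $\sum_{i=1}^N P_i(\fvec) = N\overline{P}(\fvec)$. This single factor of $N\overline{P}$ cancels one power of $\overline{P}$ in the denominator together with the explicit $N$, leaving $\frac{1}{N}\int \frac{1}{\overline{P}(\fvec)}\sum_{j=1}^N (P_j(\fvec) - \overline{P}(\fvec))^2\,d\fvec$.

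Finally I would split the integral across the $j$-sum and identify each $\int (P_j - \overline{P})^2/\overline{P}\,d\fvec$ with the Neyman $\chi^2$ divergence $\chi_{\mli{Neyman}}^2(P_j\|\overline{P})$ under the convention of the cited reference, relabel the dummy index back to $i$, and collect the $\frac{1}{N}$ prefactor to obtain $\frac{1}{N}\sum_{i=1}^N \chi_{\mli{Neyman}}^2(P_i\|\overline{P})$, which is exactly the claimed form.

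I expect the main obstacle to be bookkeeping rather than anything deep: the derivation hinges entirely on the cancellation in the second paragraph, where summing the integration weight $P_i$ over $i$ collapses to $N\overline{P}$ and reduces the $\overline{P}^2$ in the denominator to a single $\overline{P}$, which is precisely what converts the squared-deviation integrand into an f-divergence of $\chi^2$ type. A secondary point requiring care is matching the final expression to the specific naming and orientation convention for the Neyman (as opposed to Pearson) $\chi^2$ divergence, since the squared numerator $(P_i - \overline{P})^2$ is symmetric in its two arguments while it is the denominator $\overline{P}$ that fixes the orientation.
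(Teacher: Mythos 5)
Your proposal is correct and follows essentially the same route as the paper's proof: substitute the optimal discriminator $\cd_j^*(\fvec)=P_j(\fvec)/(N\overline{P}(\fvec))$, rewrite the expectations as integrals, exchange the order of the two sums so that $\sum_{i}P_i=N\overline{P}$ collapses the outer weight and cancels one power of $\overline{P}$ together with a factor of $N$, and identify each remaining term $\int (P_j-\overline{P})^2/\overline{P}\,d\fvec$ as $\chi_{\mli{Neyman}}^2(P_j\|\overline{P})$. The algebra checks out and matches the paper's derivation step for step.
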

where $\chi_{\mli{Neyman}}^2(\cdot \| \cdot)$ is the Neyman $\chi^2$ divergence~\cite{nielsen2014chi}.
\begin{proof}
Substituting $\cd^*$ into $\cl_{\cfs}^\cd$:
\begin{align*}
J_{\cfs}^{\cd} &= \sum_{i=1}^N \expe_{\fvec\sim P_i} \left[ \sum_{j=1}^N (D^*_j(\fvec)-\frac{1}{N})^2 \right]\\
&= \sum_{i=1}^N \int_\fvec P_i\sum_{j=1}^N (\frac{P_j}{N\overline{P}}-\frac{1}{N})^2 d\fvec\\
&= \int_\fvec \sum_{i=1}^N \sum_{j=1}^N P_i(\frac{P_j}{N\overline{P}}-\frac{1}{N})^2 d\fvec\\
&= \frac{1}{N^2}\sum_{j=1}^N \int_\fvec \sum_{i=1}^N P_i(\frac{P_j}{\overline{P}}-1)^2 d\fvec\\
&= \frac{1}{N^2}\sum_{j=1}^N \int_\fvec N\overline{P}(\frac{P_j}{\overline{P}}-1)^2 d\fvec\\
&= \frac{1}{N}\sum_{j=1}^N \int_\fvec \frac{(P_j - \overline{P})^2}{\overline{P}} d\fvec\\
&= \frac{1}{N} \sum_{i=1}^N \chi_{\mli{Neyman}}^2(P_i\| \overline{P})
\end{align*}
\end{proof}

Finally, by the joint convexity of f-divergence, we have the following corollary:
\begin{corollary}
\begin{align*}
\cl_{\cfs}^\cd &= \frac{1}{N} \sum_{i=1}^N \chi_{\mli{Neyman}}^2(P_i\| \overline{P}) \\
&\geq \chi_{\mli{Neyman}}^2(\frac{1}{N}\sum_{i=1}^N P_i\| \frac{1}{N}\sum_{i=1}^N \overline{P}) \\
&= \chi_{\mli{Neyman}}^2( \overline{P}\| \overline{P}) = 0
\end{align*}
and the equality is attained if and only if $P_1 = P_2 = \dots = P_N = \overline{P}$.
\end{corollary}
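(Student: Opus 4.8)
The plan is to treat the preceding theorem's closed form $J_{\cfs}^\cd = \frac{1}{N}\sum_{i=1}^N \chi_{\mli{Neyman}}^2(P_i\|\overline{P})$ as a black box and argue entirely at the level of f-divergences, with no further reference to $\cd$ or $\cfs$. First I would record the structural facts I need: $\chi_{\mli{Neyman}}^2(P\|\overline{P}) = \int_\fvec \overline{P}(\fvec)\,f\!\left(P(\fvec)/\overline{P}(\fvec)\right)\,d\fvec$ is the f-divergence generated by $f(t) = (t-1)^2$, which is \emph{strictly} convex with $f(1)=0$; moreover $\overline{P} = \frac{1}{N}\sum_j P_j$ dominates every $P_i$, so each integrand $\frac{(P_i-\overline{P})^2}{\overline{P}}$ is well defined and pointwise nonnegative.

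For the bound itself there are two routes, and I would present the convexity one to match the cited machinery while noting the simpler one. The jointly convex map $(P,Q)\mapsto \chi_{\mli{Neyman}}^2(P\|Q)$ lets me apply Jensen's inequality to the uniform mixture over $i$:
\begin{align*}
\frac{1}{N}\sum_{i=1}^N \chi_{\mli{Neyman}}^2(P_i\|\overline{P})
&\geq \chi_{\mli{Neyman}}^2\Big(\frac{1}{N}\sum_{i=1}^N P_i \,\Big\|\, \frac{1}{N}\sum_{i=1}^N \overline{P}\Big) \\
&= \chi_{\mli{Neyman}}^2(\overline{P}\|\overline{P}) = 0 ,
\end{align*}
using $\frac{1}{N}\sum_i P_i = \overline{P}$ by definition and $f(1)=0$. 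Equivalently, one can skip joint convexity entirely and observe that each summand is the integral of a nonnegative function, hence $\geq 0$, so the average is $\geq 0$ term by term.

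The delicate part, and where I would spend the most care, is the equality characterization. Because $f$ is strictly convex, the single-divergence equality case gives $\chi_{\mli{Neyman}}^2(P_i\|\overline{P}) = 0$ if and only if $P_i = \overline{P}$ almost everywhere; and a finite sum of nonnegative terms vanishes precisely when every term vanishes. Combining these, $J_{\cfs}^\cd = 0$ holds exactly when $P_i = \overline{P}$ for all $i$, i.e.\ $P_1 = \dots = P_N = \overline{P}$. The main obstacle is making the ``only if'' direction airtight: strict convexity of $f$ must be leveraged to promote the vanishing of each f-divergence to pointwise (a.e.) equality $P_i = \overline{P}$, and I would also note the internal consistency that $P_i = \overline{P}$ for every $i$ is compatible with the defining relation $\overline{P} = \frac{1}{N}\sum_i P_i$. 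With the strict-convexity equality lemma and the ``sum of nonnegatives'' observation in hand, both the claimed optimal value $0$ and its iff condition follow at once.
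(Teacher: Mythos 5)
Your proposal is correct and follows essentially the same route as the paper: take the theorem's identity $J_{\cfs}^{\cd}=\frac{1}{N}\sum_{i=1}^N \chi_{\mli{Neyman}}^2(P_i\|\overline{P})$ as given, apply joint convexity (Jensen over the uniform mixture) to get the lower bound $\chi_{\mli{Neyman}}^2(\overline{P}\|\overline{P})=0$, and conclude equality iff all $P_i=\overline{P}$. Your additional observation that the bound and, more importantly, the ``only if'' direction follow more cleanly from term-by-term nonnegativity plus strict convexity of $f(t)=(t-1)^2$ is a genuine tightening of the paper's argument, which leaves the equality characterization to the reader.
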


\section{Implementation Details}
For all three of our experiments, we use $\lambda=0.05$ and $k=5$ (See Algorithm~\ref{alg:training}).
For both optimizers, Adam~\cite{kingma2014adam} is used with learning rate $0.0001$.
The size of the shared feature vector is set to $128$ while that of the domain feature vector is $64$.
Dropout of $p=0.4$ is used in all components.
$\cc$ and $\cd$ each has one hidden layer of the same size as their input ($128+64$ for $\cc$ and $128$ for $\cd$).
ReLU is used as the activation function.
Batch normalization~\cite{43442} is used in both $\cc$ and $\cd$ but not $\mathcal{F}$.
We use a batch size of 8.

For our first two experiments on the Amazon review dataset, the MLP feature extractor is used.
As described in the paper, it has an input size of 5000.
Two hidden layers are used, with size $1000$ and $500$, respectively.

For the CNN feature extractor used in the FDU-MTL experiment, a single convolution layer is used.
The kernel sizes are 3, 4, and 5, and the number of kernels are 200.
The convolution layers take as input the 100d word embeddings of each word in the input sequence.
We use \emph{word2vec} word embeddings~\cite{mikolov2013efficient} trained on a bunch of unlabeled raw Amazon reviews~\cite{P07-1056}.
After convolution, the outputs go through a ReLU layer before fed into a max pooling layer.
The pooled output is then fed into a single fully connected layer to be converted into a feature vector of size either 128 or 64.
More details of using CNN for text classification can be found in the original paper~\cite{D14-1181}.
\amdc{} is implemented using PyTorch~\cite{paszke2017automatic}.
\end{appendices}

\end{document}